\newtheorem{proposition}{Proposition}
\newtheorem{definition}{Definition}
\DeclareMathOperator{\BER}{BER}
\DeclareMathOperator{\NCB}{NCB}
\DeclareMathOperator{\balance}{balance}
\DeclareMathOperator{\hsic}{HSIC}
\DeclareMathOperator{\tr}{tr}
\DeclareMathOperator{\cka}{CKA}
\DeclareMathOperator{\softmax}{softmax}
\DeclareMathOperator{\Xsyn}{\tilde{X}}
\DeclareMathOperator{\Ssyn}{\tilde{S}}
\newcommand{\tran}{^{\mkern-1.5mu\mathsf{T}}}
\newcommand{\tabARGN}{TabularARGN\ }
\newcommand{\tabfairgan}{TabFairGAN\ }
\newcommand{\prefair}{PreFair\ }
\newcommand{\fldgm}{FLDGM}
\newcommand{\ancb}{\mbox{A-NCB}}
\newcolumntype{Y}{>{\centering\arraybackslash}X}
\newcolumntype{L}[1]{>{\raggedright\arraybackslash\hspace{0pt}}p{#1}}
\newcolumntype{M}[1]{>{\centering\arraybackslash}m{#1}}
\newcommand{\Checkmark}{\scalebox{2}{\checkmark}}
\begin{document}
\title{\raggedright Achieving Hilbert-Schmidt Independence Under Rényi Differential Privacy for Fair and Private Data Generation}
\author{%
    \name Tobias Hyrup \email hyrup@imada.sdu.dk\\
    \addr Department of Mathematics and Computer Science\\
    University of Southern Denmark\\
    Odense, Denmark
    \AND
    \name Emmanouil Panagiotou \email emmanouil.panagiotou@fu-berlin.de\\
    \addr Department of Mathematics and Computer Science\\
    Freie Universität Berlin\\
    Berlin, Germany
    \AND
    \name Arjun Roy \email arjun.roy@unibw.de\\
    \addr Faculty for Informatik\\
    Universität der Bundeswehr München\\
    Munich, Germany
    \AND
    \name Arthur Zimek \email zimek@imada.sdu.dk\\
    \addr Department of Mathematics and Computer Science\\
    University of Southern Denmark\\
    Odense, Denmark
    \AND
    \name Eirini Ntoutsi \email eirini.ntoutsi@unibw.de\\
    \addr Faculty for Informatik\\
    Universität der Bundeswehr München\\
    Munich, Germany
    \AND
    \name Peter Schneider-Kamp \email petersk@imada.sdu.dk\\
    \addr Department of Mathematics and Computer Science\\
    University of Southern Denmark\\
    Odense, Denmark
}

\maketitle

\begin{abstract}
As privacy regulations such as the GDPR and HIPAA and responsibility frameworks for artificial intelligence such as the AI Act gain traction, the ethical and responsible use of real-world data faces increasing constraints. Synthetic data generation has emerged as a promising solution to risk-aware data sharing and model development, particularly for tabular datasets that are foundational to sensitive domains such as healthcare. To address both privacy and fairness concerns in this setting, we propose FLIP (Fair Latent Intervention under Privacy guarantees), a transformer-based variational autoencoder augmented with latent diffusion to generate heterogeneous tabular data. Unlike the typical setup in fairness-aware data generation, we assume a task-agnostic setup, not reliant on a fixed, defined downstream task, thus offering broader applicability. To ensure privacy, FLIP employs Rényi differential privacy (RDP) constraints during training and addresses fairness in the input space with RDP-compatible balanced sampling that accounts for group-specific noise levels across multiple sampling rates. In the latent space, we promote fairness by aligning neuron activation patterns across protected groups using Centered Kernel Alignment (CKA), a similarity measure extending the Hilbert-Schmidt Independence Criterion (HSIC). This alignment encourages statistical independence between latent representations and the protected feature. Empirical results demonstrate that FLIP effectively provides significant fairness improvements for task-agnostic fairness and across diverse downstream tasks under differential privacy constraints.

\end{abstract}

\section{Introduction}
With a continuous increase in digital dependence, the amount of data being created or recorded increases with it. Much of these data can reveal valuable insights about populations and individuals while forming a foundation for automated, individualized decision-making, having a significant social impact~\citep{barocas_selbst2016}. However, if data processing is performed without explicitly taking into account fairness and privacy considerations, this neglect may propagate to subsequent inference and decision-making processes, leading to biased outcomes and privacy leakage. Regulatory frameworks such as the EU General Data Protection Regulations (GDPR)~\citep{GDPR2016}, the Health Insurance Portability and Accountability Act (HIPAA)~\citep{OfficeforCivilRightsOCR2012GuidanceRule}, and the EU Artificial Intelligence Act (AI Act)~\citep{aiact2021} increasingly emphasize the social responsibilities of data and AI systems. In the wake of this, it becomes imperative to design models that are not only performant, but also fair and privacy-preserving~\citep{kiran2025-fair-private-synthetic}.

In this work, we present Fair Latent Intervention under Privacy guarantees (FLIP), illustrated in Figure~\ref{fig:intro}, which performs task-agnostic disentanglement of the protected attribute under Rényi differential privacy. By adopting a two-phase training procedure, FLIP is first trained to learn a high-quality representation of the true data, followed by a secondary phase that disentangles the protected attribute from the training data.

In contrast, prior fairness-aware work on synthetic data generation focus overwhelmingly on task-specific downstream tasks, limiting the applicability of synthetic data to that particular task~\citep{make4020022, tiwald2025tabularargn, Pujol2023}. Instead, we design FLIP independently of any downstream task, thus broadening the application range of synthetic data. In addition, we provide an RDP-compatible balanced Poisson sampling scheme that provides group-wise noise levels while ensuring equal representation of protected groups in each batch.

\begin{figure}[tb]
    \centering
    \includegraphics[width=0.6\linewidth]{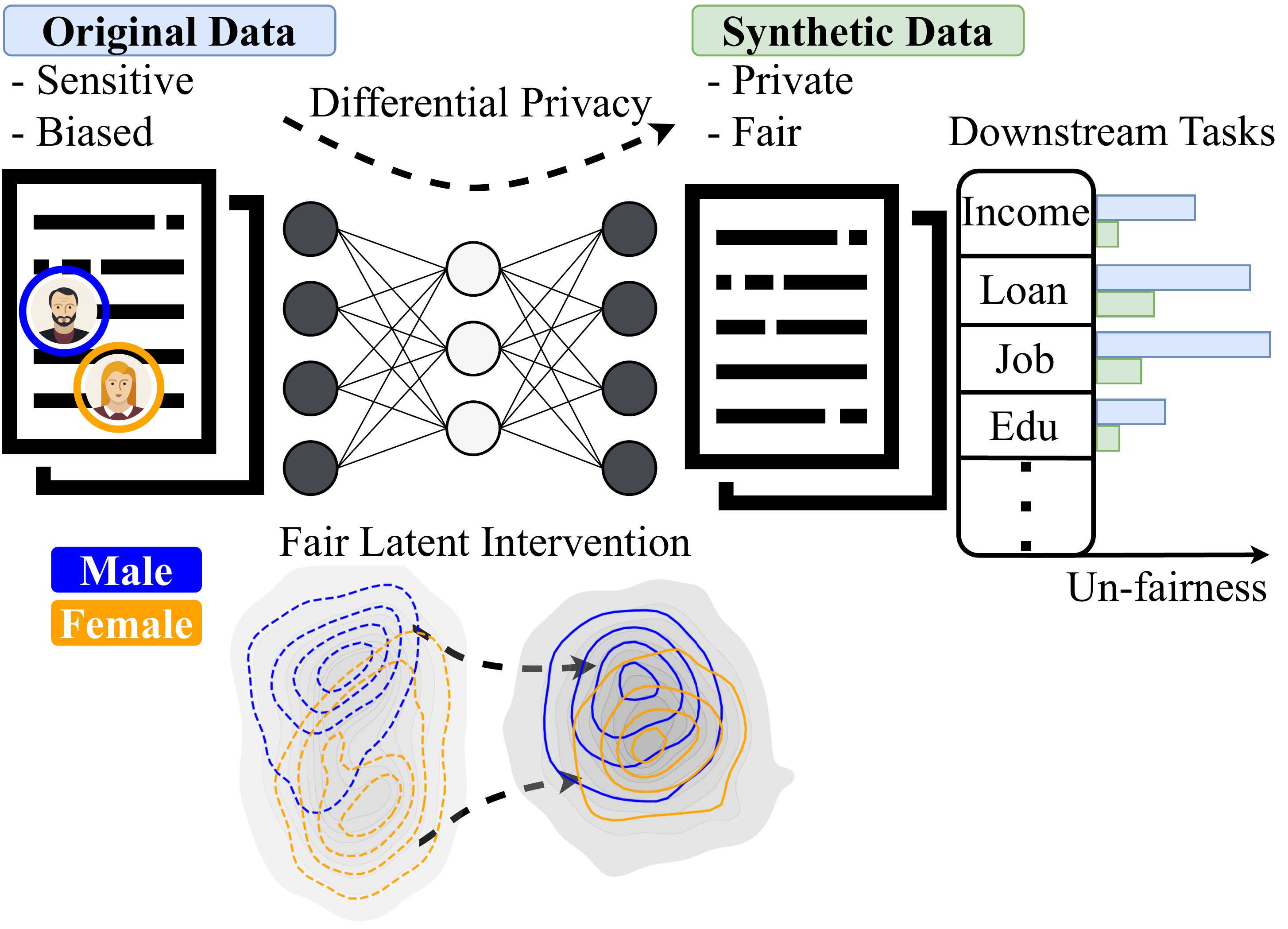}
    \caption{FLIP: Fair Latent Intervention under Privacy Guarantees.}
    \label{fig:intro}
\end{figure}

\section{Related Work}\label{sec:related_Work}
Recent advancements in generative machine learning for tabular data have led to various methods that focus on improving the utility of synthetic data, but also on fairness and privacy. These lines of work are closely related to our approach, which aims to address all these aspects in a unified framework. The following sections cover related work. Table~\ref{tab:related_work_comparison} presents the key differences for selected approaches compared to our method.

\subsection{Fair Generative Models}
Most fairness-aware generative approaches focus on a single, predefined downstream task and follow a two-stage training procedure: first, learning a strong data generator, and second, incorporating fairness objectives. Early work in this space typically employed GAN-based architectures. For example, \tabfairgan\citep{make4020022} introduces a debiasing loss applied post hoc to the generator, while CuTS \citep{pmlr-v235-vero24a} proposes a customizable generator architecture tailored to fairness constraints. DECAF \citep{NEURIPS2021_ba9fab00}, on the other hand, leverages a causal framework to eliminate discriminatory causal links between sensitive attributes and the target during generation. However, subsequent studies \citep{wang2022replication} have critiqued DECAF for poor performance in mitigating unfairness. Nonetheless, a recent approach that takes a more abstract view of fairness is the Fair Latent Diffusion Generative Model (\fldgm) \citep{ramachandranpillai2023fair}, which incorporates fairness by removing sensitive attribute information in the latent space. This aligns with our view of fairness as an intrinsic data property. However, \fldgm\ still evaluates fairness using task-specific metrics, limiting its applicability to broader, task-agnostic scenarios.

\subsection{Differentially Private Generative Models}
Differential privacy (DP)~\citep{dwork2006}, provides formal guarantees for the generative model; a property that is passed on to synthetic data due to the post-processing property of DP~\citep{dwork2014}. DP-based deep generative models have generally been dominated by generative adversarial models~\citep{TORFI2022485, Ma20234838, 10.1007/978-3-031-09342-5_17, yoon2018pategan} due to the general success of the architecture and the fact that only the discriminator requires DP training. DP-GAN~\citep{xie2018dpgan}, an early DP-based GAN model, uses the moment accountant mechanism for privacy composition, while more recent works often apply Rényi differential privacy (RDP) such as RDP-GAN~\citep{Ma20234838}, DP-CTGAN \citep{10.1007/978-3-031-09342-5_17} and RDP-CGAN \citep{TORFI2022485}. The latter also addresses difficulties with heterogeneous tabular data using a convolution variational autoencoder.

\subsection{Fair and Private Generative Models}
Another method based on a causal model and marginals, \prefair\citep{Pujol2023} optimizes for fairness, given a target and protected attribute, while using a differentially private (DP) algorithm. More recently, state-of-the-art methods for generating synthetic tabular data have shifted toward transformer-based autoregressive modeling. Notably, \tabARGN\citep{tiwald2025tabularargn} employs a deep autoregressive generator that explicitly optimizes for fair sampling only when generating the downstream target attribute. The method also supports DP training, although, rather than computing appropriate noise levels for a set number of iterations, the specified privacy budget acts as an early stopping criterion.

\begin{table}[tbp]
    \renewcommand{\arraystretch}{1.5}
    \centering
    \footnotesize
    \caption{Comparison of this work with existing literature.}
    \label{tab:related_work_comparison}
    \begin{tabular}{L{30mm}L{21mm}M{15mm}M{13mm}M{20mm}M{25mm}}
         \toprule
         \textbf{Publication} & \textbf{Model} & \textbf{Fairness} & \textbf{Privacy} & \textbf{Task-agnostic} & \textbf{Task-agnostic evaluation}\\
         \midrule 
         \citet{make4020022} & \tabfairgan & \Checkmark & & & \\
         \citet{Pujol2023} & \prefair & \Checkmark & \Checkmark & & \\
         \citet{ramachandranpillai2023fair} & \fldgm & \Checkmark & & \Checkmark & \\
         \citet{tiwald2025tabularargn} & \tabARGN & \Checkmark & \Checkmark & & \\[1ex]
         \cdashline{1-6}\\[-3ex]
         This work & FLIP & \Checkmark & \Checkmark & \Checkmark & \Checkmark   \\
         \bottomrule
    \end{tabular}
\end{table}

\section{Background}\label{sec:background}
This section outlines the foundational concepts of fairness and privacy, providing the theoretical background and definitions used in our work.

\subsection{Common Notation}
Here, we introduce a standard set of notation for consistent use throughout this work. Supplementary notations are introduced as needed in later sections. Let $\mathcal{D} = (X, S)$ denote the complete dataset consisting of both the primary feature matrix $X \in \mathbb{R}^{n \times k}$ and the binary protected attribute $S \in \{0, 1\}^n$, where $n$ and $k$ are the number of samples and features, respectively. Let $\mathcal{S} = \{0, 1\}$ denote the set of possible values for $S$. 

Let $A = \{a_1, a_2, \dots, a_k\}$ denote the set of attributes in $X$. We then define column-wise partitions of matrix $X$ on a subsets of features $A' \subseteq A$ as $X^{(A')}$. Likewise, we define row-wise partitions of matrix $X$ as $X_{(S=s)}=\{x_i \in X \mid S=s\}$, where $S$ is a feature of $X$.

We define $P_z$ as a simple prior probability distribution from which i.i.d.\ samples can be drawn, then a generative model $G_\theta : z \rightarrow \tilde{x},\ z \sim P_z$ generates synthetic data w.r.t.\ learned parameters $\theta$. The resulting synthetic dataset is denoted by $\tilde{\mathcal{D}} = (\Xsyn, \Ssyn)$.

\subsection{Fairness}
In this work, we focus on \emph{group} fairness, which can be defined largely as a property of an algorithm $\mathcal{A}$ such that there is no negative bias to the detriment of certain groups. This definition of fairness requires specifying a protected attribute; an attribute whose values denote membership in groups subject to protection against discrimination~\citep{le2022survey}. Traditionally, machine learning research has focused on fairness for downstream tasks that target a predefined target attribute $y \in Y$, and it can be evaluated for inference bias in $Y$ with respect to the membership of the protected group. Consequently, such fairness definitions are restricted to a predictive algorithm with a single target attribute. 

\subsubsection{Fairness for Downstream Tasks}

Many notions of algorithmic fairness exist in the domain. A common approach is Fairness Through Unawareness (FTU), which assumes fairness if the algorithm does not explicitly use the protected attribute~\citep{castelnovo2022clarification, NEURIPS2021_ba9fab00}. However, this ignores indirect influence through correlated features. Statistical Parity (SP) provides a stricter notion by requiring that the predicted outcomes be independent of group membership, i.e., $\Pr(\hat{Y} | S = s) = \Pr(\hat{Y} | S = s')$ for all $s, s' \in S$.

\subsubsection{Fairness for Synthetic Data}
\label{sec:fair_synth_dat}
Fairness in the context of synthetic data requires a distinct perspective, as it is no longer solely an attribute of an algorithm, such as a classification model, but also an inherent property of the data themselves. Nevertheless, many works consider the fairness level of their generative approaches through the lens of a downstream model~\citep{ramachandranpillai2023fair, fairgan2022, make4020022, NEURIPS2021_ba9fab00, pmlr-v235-vero24a, Choi_Park_Kim_Lee_Park_2024}. The rationale is that if a downstream model trained on synthetic data without explicit fairness interventions does not discriminate between protected groups, then the synthetic data are considered fair with respect to the protected and target attributes. This evaluation approach, which we refer to as \emph{Task Fairness} (TF), assesses whether a downstream model, trained on synthetic data, satisfies a given fairness criterion $\mathcal{U}(X, Y)$.

Nonetheless, TF assumes that fairness is evaluated concerning a single, predefined target task. Indeed, while many datasets are curated with a specific task in mind, this is not always the case, especially in real-world scenarios. Synthetic datasets are often created for broad distribution and open-ended use, where the final application is unknown. In this more abstract, but also more realistic scenario, the assumption underlying TF becomes too restrictive. Consider, for example, a financial dataset originally intended for predicting loan eligibility. If synthetic data are generated and evaluated for fairness with respect to the outcome variable \emph{loan}, then no fairness guarantees can be made when the data are used to predict \emph{income level}, even though it is a perfectly valid task for this dataset.

In our work, we focus on the more realistic, more abstract, but also more difficult problem: evaluating fairness as a property inherent to the generated data itself, rather than relative to a specific downstream task. To this end, we consider two evaluation approaches that measure fairness as a property only of the generated data and the protected attribute. Specifically, $\epsilon$-fairness measures the predictability of the protected attribute given all other attributes, while cluster fairness evaluates how well the protected attribute aligns with the underlying cluster structure.

\begin{definition}
\label{def:epsilon-fairness}
    ($\epsilon$-fairness~\citep{fairgan2022, Feldman2015}). A generative mode $G_{\theta}$ is said to be $\epsilon$-fair iff for any classification model $f : \Xsyn \rightarrow \Ssyn$
    \[
        \BER \left(f(\Xsyn), \Ssyn\right) > \epsilon,
    \]
    where the balanced error rate (BER) is defined as
    \begin{equation}    
        \BER = \frac{1}{2}\Big(\Pr[f(\Xsyn) = 0 \mid \Ssyn = 1] \\ 
        + \Pr[f(\Xsyn) = 1 \mid \Ssyn = 0]\Big)
    \end{equation}
\end{definition}

Similar to measuring the predictability of the protected attribute, the clusterability can also be evaluated by computing the cluster balance~\citep{NIPS2017_978fce5b, LeQuy2021}. We extend the notion of cluster balance to a normalized version (Normalized Cluster Balanced (NCB)) to accommodate imbalanced protected groups.

\begin{definition}
\label{def:cluster-fairness}
    (Cluster Fairness). Given a clustering $C =\{C_1, C_2\}$, $f : \Xsyn \rightarrow C$, a generative model $G_{\theta}$ is cluster fair w.r.t.\ a threshold $\epsilon$ iff:
    \[
        \NCB\left(f(\Xsyn), \Ssyn\right) > \epsilon,
    \]
    where the normalized cluster balance (NCB) is computed as
    \begin{align*}
        \balance(c, S) &= \min\left(\frac{\Pr[x \in c | S = 0]}{\Pr[x \in c | S = 1]}, \frac{\Pr[x \in c | S = 1]}{\Pr[x \in c | S = 0]} \right) \\
        \NCB(C, S) &= \min_{c \in C} \balance(c, S)
    \end{align*}
\end{definition}

Given these definitions, we are now ready to define fairness from a task-agnostic perspective with appropriate evaluation measures.

\begin{definition} \label{def:base_fairness}
    (Disentangled Fairness (DF)). A generative model $G_{\theta}$ satisfies disentangled fairness iff 
    \begin{equation*}
        \forall \tilde{\mathcal{D}} \sim G_{\theta}(Z),\ (\Xsyn, \Ssyn) \in \tilde{\mathcal{D}},\ \Xsyn \perp\!\!\!\perp \Ssyn.
    \end{equation*}
\end{definition}

Definition~\ref{def:base_fairness} defines fairness as the complete disentanglement of the protected attribute from the data. In such a dataset, any underlying structure related to the protected attribute has been erased, and the attribute no longer carries any informative value with respect to the other attributes. The degree to which this has been achieved can be measured by $\epsilon$-fairness and cluster fairness. As we do not consider any attribute a target, \textit{task fairness} measures alone cannot capture the disentanglement of $S$. We argue that while this definition may not constitute an optimal fairness objective when the downstream tasks are known, it does represent the likely scenario where the downstream task or tasks are unknown. Effectively, Disentangled Fairness considers all non-protected attributes as targets, treating them all equally.

\subsection{Differential Privacy} \label{sec:privacy:dp}
Differential privacy using the Gaussian mechanism provides guarantees that a single individual can at most influence the output distribution of an algorithm by a factor of $\exp(\varepsilon)$ with a probability $\delta$ that it does not hold, thus providing privacy guarantees for the algorithm itself~\citep{dwork2006}. Formally, DP is defined in Definition~\ref{def:dp}.

\begin{definition} \label{def:dp}
    (($\epsilon, \delta$)-Differential Privacy~\citep{dwork2014, ilya2017}). Let $\mathcal{A} : \mathcal{D} \rightarrow \mathcal{R}$ be a randomized algorithm. $\mathcal{A}$ satisfies ($\varepsilon, \delta$)-DP if for all adjacent datasets $\mathcal{D}_{1}, \mathcal{D}_{2}$ and $J \subset \mathcal{R}$
    \[
        \Pr[\mathcal{A}(\mathcal{D}_{1}) \in J] \leq \exp(\varepsilon)\Pr[\mathcal{A}(\mathcal{D}_{2}) \in J] + \delta,
    \]
\end{definition}

In practice, carefully calibrated noise can be added to an algorithm, essentially obfuscating the true contribution of individuals. Using the Gaussian mechanism, let $c$ be the $\ell_2$-sensitivity of the data, that is, the largest absolute difference in the training data, let $\sigma$ be the standard deviation, and let $f: X \rightarrow \mathbb{R}$, then noise is added as~\citep{10.1145/2976749.2978318}:
\begin{equation} \label{eq:dp_noise}
    \mathcal{A}(\mathcal{D}) := f(\mathcal{D}) + \mathcal{N}(0, (c\sigma)^2)
\end{equation}

DP stochastic gradient descent (DP-SGD) offers an efficient approach to provide DP guarantees to neural networks~\citep{10.1145/2976749.2978318}. Under this optimization scheme, noise is added to the per-sample gradients, while controlling the sensitivity by clipping the gradient norms. For efficiency, gradient perturbation is performed on the aggregated mini-batch gradients. In addition, the fact that not all training samples are used during each batch can be exploited to amplify privacy, which is typically performed using Poisson sampling without replacement and a given sample rate~\citep{10.1145/2976749.2978318, pmlr-v97-zhu19c}. 

Furthermore, the privacy loss can be tightly composed over multiple iterations using the Rényi divergence. Formally, Rényi DP (RDP) is defined in Definition~\ref{def:rdp} and offers nice conversion to the standard $(\varepsilon, \delta)$-DP definition as \mbox{$\left(\varepsilon + \frac{\log 1/\delta}{\alpha -1 }, \delta\right)$-DP}~\citep{ilya2017}.

\begin{definition} \label{def:rdp}
    (Rényi Differential Privacy (RDP)~\citep{ilya2017}). Let $D_{\alpha}(\cdot \lVert \cdot)$ define the Rényi divergence with degree $\alpha$. A randomized algorithm $\mathcal{A}$ is said to satisfy $(\alpha, \varepsilon_{\mathcal{A}}(\alpha))$-RDP if for any adjacent datasets $\mathcal{D}_1, \mathcal{D}_2$
    \[
        D_{\alpha}\left(\mathcal{A}(\mathcal{D}_1) \parallel \mathcal{A}(\mathcal{D}_2) \right) \leq \varepsilon_\mathcal{A}(\alpha).
    \]
\end{definition}

Combining DP-SGD with an RDP accountant allows efficient training under tight compositions, making differentially private neural networks viable.

\subsection{Integration Challenges}
Modeling fair and private representations of real data while maintaining utility is highly challenging. While privacy-oriented perturbations aim at obfuscating the learned representation enough to provide a privacy cover, fairness interventions through attribute disentanglement actively induce bias-mitigating distributional modulations. Consequently, the learned distribution will intentionally be dissimilar from the true distribution, causing subsequent fidelity and utility evaluations to not reflect the true applicability to real-world settings. This poses a challenge within the privacy, fairness, and quality triad, as existing methods struggle to effectively assess how well synthetic data represent a reference distribution that is truly private, fair, and useful, but typically unknown.

\section{FLIP: Fair Latent Intervention Under Privacy Guarantees}\label{sec:methods}
Exploring the interaction between fairness and privacy for tabular data requires a flexible architecture that allows for targeted training perturbations while maintaining high-quality and diverse outputs. Additionally, generating heterogeneous tabular data is challenging, as it requires capturing diverse data types and intricate relationships and constraints within the data. To this end, we propose \textbf{Fair Latent Intervention Under Privacy Guarantees (FLIP)}, a transformer-based variational autoencoder with latent diffusion that allows perturbation of the learned representations while mapping the latent space to a simple prior through a diffusion process. The latent space effectively acts as a joint encoding of the heterogeneous data, allowing for multivariate, fairness-informed perturbation of the learned representation. The VAE is trained to completion before fitting the diffusion, allowing a consistent latent space during diffusion training. Due to the post-processing property of DP~\citep{dwork2014}, only VAE needs DP training as latent samples enjoy the DP guarantees of the VAE encoder. The following sections cover model architecture and training, fairness intervention, and privacy preservation.

\subsection{Model Architecture}
To learn a curated latent space that encodes heterogeneous tabular data, we first pre-train a transformer-based variational autoencoder (VAE) in two phases. \textit{Phase 1:} optimize for quality; \textit{Phase 2:} Perturb to induce fairness constraints. Next, we train a diffusion model on the encoded training data such that the latent space is projected to a standard Gaussian from which new samples can be drawn. The base architectures of the VAE and the diffusion model are adapted from the TabSyn model~\citep{zhang2024mixedtype}.

\subsubsection{VAE}
\begin{figure}[tb]
    \centering
    \includegraphics[width=\textwidth]{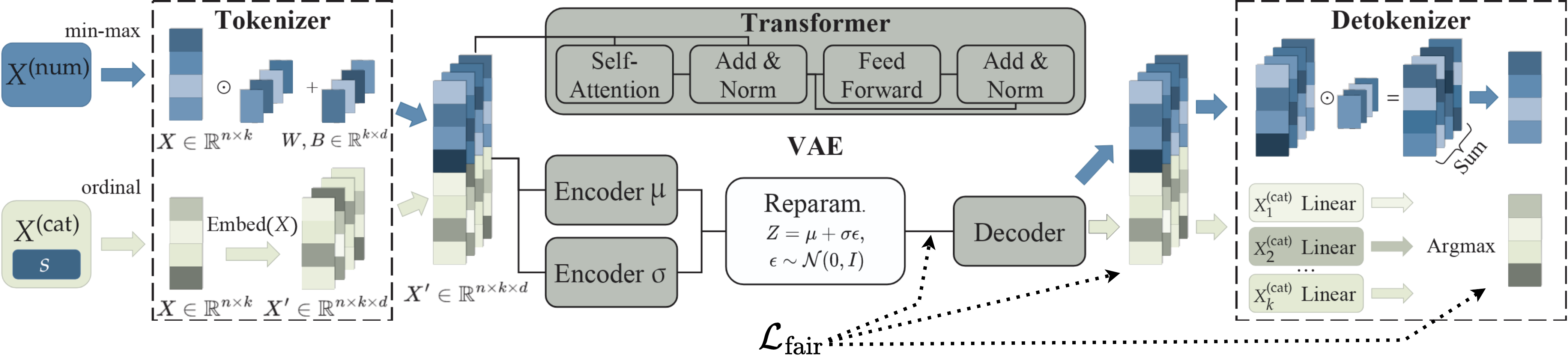}
    \caption{Architecture of base VAE. Phase 1 focuses on quality. Phase 2 performs fairness-informed interventions in the latent space, after decoding, and for the reconstructed distributions of each feature.}
    \label{fig:VAE}
\end{figure}

The VAE, presented in Figure~\ref{fig:VAE}, consists of three main parts: a tokenizer, encoder and decoder transformer blocks, and a detokenizer. The tokenizer's and detokenizer's primary objective is to map the individual attributes to and from a basic feature-wise encoding using categorical embeddings and numerical projections. The core VAE consists of transformer blocks for the encoders and the decoder, enabling a multivariate, learnable representation in latent space.

\subsubsection{Diffusion model} The diffusion has a single task: map the learned VAE latent space to a simple prior. We adopt the TabSyn~\citep{zhang2024mixedtype} score-based diffusion processes, which, in the forward process, perturb the latent vectors with Gaussian noise, while a denoising multi-layer perceptron reconstructs samples. We refer to the work by~\citet{zhang2024mixedtype} for further details. Nevertheless, instead of training on outputs from the encoder $\mu$, we use reparameterized latents using the added noise as regularization~\citep{ramachandranpillai2023fair} and simplifying generation by leveraging learned variance rather than fixed latent scaling.

\subsection{Fairness Intervention}
Given the definition of Disentangled Fairness (see Definition~\ref{def:base_fairness}), the assignment of protected groups to synthetic records is expected to be random. However, to maintain consistency with typical output of a generative model, we include the protected attribute during both training and generation, rather than assigning a value at random. However, since our objective is disentanglement, the reconstruction loss for the protected group attribute should not be of importance. Rather, each protected group should be assigned with similar probabilities regardless of the frequency of occurrence in the training data.

\subsubsection{Balanced training}
To promote a uniform distribution of the protected attribute, we adopt two strategies. First, balanced mini-batch sampling ensures equal group representation. Second, we define a loss encouraging uniform group probabilities for all values of $S$. Let $|S|$ be the number of unique values of $S$, and $\Ssyn \in \mathbb{R}^{n \times |S|}$ be the realized logits. The loss is the $\ell_2$-norm between the mean group-wise softmax distributions and the uniform distribution:
\begin{equation}
    \mathcal{L}_S = \left\lVert\frac{1}{n}\sum_{i=1}^{n}\softmax\left(\Ssyn_{i}\right) - \frac{1}{|S|}\mathbf{1}\right\lVert_{2},
\end{equation}
where $\softmax(X_{i})_{j} = \frac{\exp(X_{ij})}{\sum_{k=1}^{p} \exp(X_{ik})}$ and $\mathbf{1}$ is a vector of ones. In other words, the softmax probability mass for each group should be the same for all groups.

\subsubsection{Disentanglement}
Building on Definition~\ref{def:base_fairness} that defines fairness as the independence of the protected attribute w.r.t.\ the rest of the data, $X \perp\!\!\!\perp S$. We note that this definition is equivalent to 
\begin{equation}
    \Pr(X) = \Pr(X \mid S = s), \quad \forall s \in \mathcal{S},
\end{equation}
implying that the representation of $X$ should be consistent across all protected groups. Thus, the VAE's fairness objective is to ensure similar representations for these groups. To robustly measure representation similarity, we adopt centered kernel alignment (CKA)~\citep{pmlr-v97-kornblith19a}, a normalized adaptation of the Hilbert-Schmidt Independence Criterion (HSIC). Let $A \in \mathbb{R}^{n \times p_0}$ and $B \in \mathbb{R}^{n \times p_1}$ be data matrices with each row representing sample vectors with dimensions $p_0$ and $p_1$, respectively. Let ${K_{i, j}= k(a_i, a_j)}$ and ${L_{i, j} = l(b_i, b_j)}$, where $k$ and $l$ are kernels. The Hilbert-Schmidt criterion is defined as:
\begin{equation}
    \hsic(K, L) = \frac{1}{(n-1)^2}\tr(KHLH),
\end{equation}
where $H$ is the centering matrix~\citep{pmlr-v97-kornblith19a}. Then the normalized version, CKA, is defined as:
\begin{equation}
    \cka(K, L) = \frac{\hsic(K, L)}{\sqrt{\hsic(K, K)\hsic(L, L)}}.
\end{equation}
Assuming that the matrices $A$ and $B$ are centered, the use of a linear kernel simplifies CKA to
\begin{equation}
    \cka(A, B) = \frac{\lVert A\tran B \lVert_{F}^{2}}{\lVert A\tran A \lVert_{F}\lVert B\tran B \lVert_{F}},
\end{equation}
where $\lVert \cdot \rVert_{F}$ is the Frobenius norm~\citep{pmlr-v97-kornblith19a}. While CKA usually compares representations of the same samples, here we compare feature covariance patterns across protected groups. Let ${A, B \in \mathbb{R}^{n_i \times p}}$, where $n_i$ is the number of group samples and $p$ the number of activations. Then, ${\text{CKA}\tran(A, B) := \text{CKA}(A\tran, B\tran)}$ measures similarity in activation patterns.

\subsection{Two-Phase Training}
We divide VAE training into two phases: the first ensures high-quality representations and the second focuses on explicit fairness intervention through disentanglement. This separation allows greater control of bias mitigation on the learned representations. Implicit fairness interventions are applied throughout both phases using two strategies: balanced sampling for equal group representation and a uniform attribute loss for the protected attribute to encourage equal group probabilities. The VAE is trained using DP-SGD across both phases.

\subsubsection{Training Phase 1: Quality}
The first training focuses primarily on ensuring a robust and high-quality representation of the training data. Therefore, this phase consists primarily of standard VAE training with some modifications, which are described later. VAEs are traditionally trained using the ELBO loss~\citep{DBLP:journals/corr/KingmaW13}, however, because we model the prior by training a latent diffusion model, we employ $\beta$-VAE loss, which adds weight $\beta$ for the KL-divergence term~\citep{higgins2017betavae}. Following the approach of \citet{zhang2024mixedtype}, we adaptively decrease the $\beta$-value to allow for better reconstruction. Let $q, p$ denote a probabilistic encoder and decoder, respectively. Assuming conditional independence, we can write the $\beta$-VAE ELBO loss as follows:
\begin{equation}\label{eq:loss_quality}
    \mathcal{L}_\text{ELBO} = \mathbb{E}_{z \sim q(z|x)}\left[\log p(x^{(\text{cat})}|z) + \log p(x^{(\text{num})}|z) \right] - D_{KL}\left(q(z|x) \parallel p(z) \right).
\end{equation}
To approximate the reconstruction loss, we use cross-entropy (CE) for the categorical features and mean squared error (MSE) for the numerical features. With the integration of fairness intervention, the final objective function, with notational flexibility, becomes
\begin{equation}
    \mathcal{L}_\text{quality} = \mathcal{L}_\text{ELBO}(X) + \mathcal{L}_{S}(S)
\end{equation}

\subsubsection{Training Phase 2: Disentanglement}
After learning a strong quality-focused representation, we apply controlled bias mitigation to remove the implicit patterns that connect the data to the protected attribute. Rather than naïvely adding a disentanglement term to $L_\text{quality}$, which may require large, unstable parameter updates, we change the cost landscape to focus on bias mitigation controlled by a penalty term quantifying deviation from a reference representation. Let $M_{\theta_{t}}$ denote the VAE with the learned parameters $\theta$ at time $t$, where $t=0$ represents the timestep after the last parameter update of Phase 1. Similarly, let $q_{\theta_{t}}$ and $p_{\theta_{t}}$ denote the encoder and decoder of $M_{\theta_{t}}$. Considering $M_{\theta_{0}}$ the reference representation, we define the loss function as:
\begin{equation}\label{eq:fair_loss}
    \mathcal{L}_\text{fair} = \underbrace{D\Bigl(q_{\theta_{0}}(z|x) \parallel q_{\theta_{t}}(z|x)\Bigr)}_\text{Divergence penalty} \\
    + \underbrace{\lambda D'\Bigl(q_{\theta_{t}}(z|x_{(0)}) \parallel q_{\theta_{t}}(z|x_{(1)})\Bigr)}_\text{Disentanglement}\;,
\end{equation}
where $D$ and $D'$ may be the same or different divergence measures and $\lambda$ is a hyperparameter controlling the fairness level. To implement fairness interventions during secondary training, we employ a multi-stage protocol that enables precise control over activation pattern perturbations.

Latent Space: The primary objective is to generate an unbiased representation, which makes the latent space a natural candidate for disentanglement. However, in our two-phase training approach, restricting $\mathcal{L}_\text{fair}$ calculation to the latent space limits gradients flow to the encoder, bypassing the decoder and reconstruction processes.

Detokenizer: Fairness interventions post-detokenization enables full gradient flow throughout the network. However, the feature-wise reconstruction of decoded logits requires computing the disentanglement loss per feature, while combining all features would cause disproportionate weighting of features with more unique values. Therefore, we compute the mean loss in feature-wise disentanglement, where the pre-aggregation numerical logits are used. Nevertheless, the detokenizer's limitations in considering joint distributions limits the fairness signal reaching the decoder.

Decoder: In the final intervention stage, the decoder receives direct fairness-aware perturbation. With its output dimensionality matching the latent space, we compute $\mathcal{L}_\text{fair}$ using methods identical to those applied in the latent space.

To approximate $\mathcal{L}_\text{fair}$, we use negative $\cka\tran$ for the disentanglement for all stages. The divergence penalty for the latent and decoder stages is approximated by using the sliced Wasserstein distance (SWD) defined by~\citet{Bonneel2015-os}. At the detokenizer stage, rather than individually computing SWD for all features, we employ $\mathcal{L}_\text{quality}$ as its component $\mathcal{L}_\text{ELBO}$ provides an approximation of the divergence penalty, and $\mathcal{L}_S$ is still required to maintain uniform distribution of the protected attribute.

\subsection{Balanced Sampling Under RDP}
Balanced sampling helps mitigate bias by ensuring equal group representation in training batches. In practice, this can be achieved by selecting the training samples per batch to maintain a uniformly distributed protected attribute. Let $b \leq n,\ b \in \mathbb{Z}^+$ denote the expected batch size. Let $\mathcal{G} := \{S_{(s)} \mid s \in \mathcal{S}\}$ be the set of all disjoint group partitions, and let $m = \min_{g \in \mathcal{G}} |g|$ be the cardinality of the smallest group. Then, the number of iterations per epoch can be defined as
\begin{equation}
    L = \left\lfloor\frac{m}{b} \cdot |\mathcal{G}|\right\rfloor.
\end{equation}
The group-wise sample rates are then found as
\begin{equation}
    \gamma_{(s)} = \frac{m}{L|S_{(s)}|}, \quad \forall s \in \mathcal{S}.
\end{equation}
Computing the sample rates as above causes the omission of some records in each training epoch, but given enough epochs, all records are expected to be sampled. $L$ could be defined with the ceiling function, which would result in each epoch processing more samples than there are records in the training data. However, the impact on the overall training would be negligible, and the only effect on DP-SGD is the number of iterations over which the privacy loss is composed.

Typically, privacy amplification with Poisson sub-sampling in DP-SGD is performed using a fixed sample rate $\gamma$, such that all records are uniformly sampled with probability $\gamma$. However, enforcing balanced batches on imbalanced training data changes the group-wise sample rates, requiring careful consideration to ensure that the DP guarantees hold.

\begin{proposition}\label{prop:rdp_gamma}
    Let $\gamma, \gamma' \in (0, 1]$ denote the sample rate. An algorithm that obeys $(\alpha,\varepsilon(\alpha, \gamma))$-RDP also obeys $(\alpha,\varepsilon(\alpha, \gamma'))$-RDP $\ \forall \gamma' \geq \gamma$.
\end{proposition}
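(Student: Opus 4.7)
The plan is to reduce the claim to a monotonicity statement about $\varepsilon(\alpha, \gamma)$ in its second argument, and then apply transitivity of $\leq$. First, I would unfold Definition~\ref{def:rdp}: the hypothesis ``$\mathcal{A}$ obeys $(\alpha, \varepsilon(\alpha, \gamma))$-RDP'' means that for every pair of adjacent datasets $\mathcal{D}_1, \mathcal{D}_2$ the Rényi divergence satisfies $D_{\alpha}(\mathcal{A}(\mathcal{D}_1) \parallel \mathcal{A}(\mathcal{D}_2)) \leq \varepsilon(\alpha, \gamma)$. Because RDP is phrased as a one-sided upper bound, if we can show $\varepsilon(\alpha, \gamma) \leq \varepsilon(\alpha, \gamma')$ whenever $\gamma \leq \gamma'$, then chaining the two inequalities immediately yields $D_{\alpha}(\mathcal{A}(\mathcal{D}_1) \parallel \mathcal{A}(\mathcal{D}_2)) \leq \varepsilon(\alpha, \gamma')$, which is exactly $(\alpha, \varepsilon(\alpha, \gamma'))$-RDP.

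Second, I would verify the monotonicity of $\varepsilon(\alpha, \gamma)$ in $\gamma$. In the Poisson-subsampling setting used by FLIP, the standard accountants for the subsampled Gaussian mechanism (Mironov et al.; Wang et al.) give closed-form bounds that, after expansion, are polynomials in $\gamma$ with non-negative coefficients, hence manifestly non-decreasing on $\gamma \in (0,1]$. A more intrinsic way to see this is via a coupling: for $\gamma \leq \gamma'$, each inclusion indicator of the rate-$\gamma'$ Poisson sample can be written as the disjunction of an independent $\mathrm{Bernoulli}(\gamma)$ coin and an additional independent $\mathrm{Bernoulli}((\gamma'-\gamma)/(1-\gamma))$ coin, so the rate-$\gamma'$ sample stochastically dominates its rate-$\gamma$ counterpart. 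Writing $\mathcal{A}_{\gamma}(\mathcal{D}_1) = \gamma Q + (1-\gamma) P$ and $\mathcal{A}_{\gamma}(\mathcal{D}_2) = P$, where $Q$ (resp.\ $P$) is the output distribution when the differing record is (resp.\ is not) included, the mixture moves monotonically away from $P$ as $\gamma$ grows, which pushes the worst-case Rényi divergence upward.

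The final step is assembly: combining the hypothesized bound $D_{\alpha} \leq \varepsilon(\alpha, \gamma)$ with the monotonicity $\varepsilon(\alpha, \gamma) \leq \varepsilon(\alpha, \gamma')$ gives $D_{\alpha} \leq \varepsilon(\alpha, \gamma')$ for all adjacent pairs, matching Definition~\ref{def:rdp} with parameters $(\alpha, \varepsilon(\alpha, \gamma'))$. The main obstacle I anticipate is making the ``mixture moves away from $P$'' argument rigorous for arbitrary $\alpha$: Rényi divergence is not jointly convex in both arguments for every order $\alpha$, so one cannot merely invoke convexity. The cleanest route is to specialize to the accountant FLIP actually uses and read monotonicity directly off the closed-form expression (a routine calculation); for generality one instead appeals to a data-processing / joint-convexity argument applied to the sub-sampling step, which is the only place where real work is needed.
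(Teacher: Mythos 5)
Your proposal is correct and follows essentially the same route as the paper: both arguments reduce the claim to monotonicity of the sub-sampled RDP bound in $\gamma$, read off from the closed-form accountant expression (the paper uses $\varepsilon \leq \gamma^2\,2\alpha/\sigma^2$ with fixed sensitivity, and remarks afterwards that the general bound of Wang et al.\ works because $\gamma$ multiplies only non-negative terms, matching your ``polynomial with non-negative coefficients'' observation). Your additional coupling sketch is a nice aside, but as you note yourself it is not needed once the closed-form monotonicity is in hand, which is exactly where the paper stops as well.
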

\begin{proof}
    Fix $\ell_2$-sensitivity to 1. The sub-sampled RDP bound is found as~\citep{DBLP:journals/corr/abs-1908-10530}
    \begin{align*}
        \varepsilon &\leq \gamma^2 \frac{2\alpha}{\sigma^2}.
    \end{align*}
    Then the following applies as $\alpha, \sigma > 0$
    \begin{align*}
        \varepsilon &\leq \gamma^2 \frac{2\alpha}{\sigma^2} \leq \gamma'^{\,2} \frac{2\alpha}{\sigma^2}, \quad \forall \gamma' \geq \gamma.
    \end{align*}
    Accordingly, the privacy loss increases with a growing sample rate implying that
    \[
        (\alpha,\varepsilon(\alpha, \gamma))\text{-RDP} \implies (\alpha,\varepsilon(\alpha, \gamma'))\text{-RDP}
    \]
\end{proof}
Proposition~\ref{prop:rdp_gamma} also extends to the general case without a fixed sensitivity defined by~\citet{pmlr-v89-wang19b} since $\gamma$ is always a factor for positive elements. Intuitively, it follows by the proposition that in the worst case, noise should be added with respect to the largest sample rate $\gamma_{\max} = \max\{\gamma_{(s)} \mid s \in \mathcal{S}\}$, i.e., the minimally represented group. However, more importantly, it states that group-wise noise levels can be defined given a fixed $\varepsilon$. Furthermore, it is apparent that the required noise is defined as a combination of $\varepsilon$ or $\gamma$, allowing variable sample rates or privacy budgets. \citet{boenisch2023have} take an approach of varying the privacy budgets across samples, defining individualized DP-SGD (IDP-SGD) based on adjusting the gradient norm clipping for each privacy group.

As DP-SGD adds noise to the accumulated gradients for efficiency, a global noise multiplier, $\sigma_\text{global}$, is required to maintain efficient computations. However, as the noise level is adjusted w.r.t.\ the gradient norm clipping (see Equation~\ref{eq:dp_noise}), $c_\text{global}$, a global noise multiplier can be found if on average the gradient norm clipping corresponds to a pre-determined clipping value. Given Proposition~\ref{prop:rdp_gamma}, our approach with varying sample rates can be formulated as a special case of IDP-SGD, with group-wise sample rates rather than privacy budgets.

In practice, the group-wise noise levels, $\sigma_p$ for group $p$, can be empirically found without $c_\text{global}$ using target values for $\varepsilon$, $\delta$, and $\gamma$, and total number of iterations~\citep{DBLP:journals/corr/abs-2109-12298}. Let $c_p, \sigma_p$ denote the gradient norm clipping and noise multiplier for group $p$, respectively. Let $N$ denote the total number of training samples. Then, because the expected gradient clipping can be described as an average $c_\text{global}=\frac{1}{N}\sum_{p = 1}^{|\mathcal{G}|}\mathcal{G}_{p}c_{p}$, $\sigma_\text{global}$ can be found as
\begin{equation}\label{eq:group_dp_noise_scale}
    \sigma_\text{global} = \left( \frac{1}{N}\sum_{p=1}^{|\mathcal{G}|}\frac{|\mathcal{G}_{p}|}{\sigma_{p}}\right)^{-1}.
\end{equation}
For the derivation of Equation~\ref{eq:group_dp_noise_scale}, we refer to the work by~\citet{boenisch2023have}.

\section{Experiments}\label{sec:experiments}
To evaluate the proposed method, we perform a series of experiments, evaluating data quality, privacy preservation, and bias mitigation on a variety of measures. We explore the behavior of FLIP with changing parameter settings, compare our results with existing methods, and explore how FLIP extends and complements the state of the art in privacy-fairness synthetic data generation. To ensure robust results, all experiments are performed using 3-fold cross-validation, keeping the splits consistent for all methods, thus ensuring comparable results evaluated against the same unseen test sets.

\subsection{Datasets} 
We select four publicly available datasets that recent studies have identified as suitable for fairness assessment \citep{le2022survey}. Specifically, we evaluate on the Adult~\citep{adult_2} and Dutch census datasets~\citep{Van_Der_Laan2000-mx}, as well as two states from the ACS-I (new Adult) dataset~\citep{ding2021-acsi} with high positive target rates between groups, namely Alabama (AL) and Utah (UT). Table~\ref{tab:datasets} contains the relevant descriptors for each dataset.

\begin{table}[tb]
    \centering
    \caption{Dataset used for experiments with relevant descriptors.}
    \label{tab:datasets}
    \begin{tabularx}{0.8\textwidth}{lYYYY}
        \toprule
        \textbf{Dataset} & Samples & \#Num. & \#Cat. & Protected \\
        \midrule
        \textbf{Adult} & 43,914 & 6 & 8 & Sex \\
        \textbf{ACS-I AL} & 20,788 & 2 & 8 & Sex \\
        \textbf{ACS-I UT} & 16,221 & 2 & 8 & Sex \\
        \textbf{Dutch} & 18,440 & 0 & 12 & Sex \\
        \bottomrule
    \end{tabularx}
\end{table}

\subsection{Baseline Competitors} 
We compare our method with \tabfairgan\citep{make4020022}, \tabARGN\citep{tiwald2025tabularargn}, \prefair\citep{Pujol2023}, and \fldgm~\citep{ramachandranpillai2023fair}. These methods are well-established generative models for tabular data, taking fairness and privacy into account. For a more detailed description of the methods, see Section~\ref{sec:related_Work} and Table~\ref{tab:related_work_comparison}.

\subsection{Evaluation measures}
For downstream (fairness) evaluation, we use a state-of-the-art gradient boosting model for tabular data, LightGBM \citep{ke2017lightgbm}. We evaluate the generated data by measuring data quality, fairness, and privacy.

\subsubsection{Data quality}
Following \citet{zhang2024mixedtype}, we measure the higher-order metrics of $\alpha\text{-precision}$ and $\beta\text{-recall}$, measuring the fidelity and diversity of the data~\citep{alaa2022-alpha-beta}. For comparative utility evaluation, we measure the downstream AUC ROC classification performance for the standard prediction task associated with each dataset, ensuring comparability with baselines.

\subsubsection{Fairness}\label{sec:experiments:fairness}
As previously discussed, we consider fairness for synthetic data to be a property of the data rather than relying on a predefined downstream task. Therefore, we assess fairness using the \emph{balanced error rate} (BER, see Definition~\ref{def:epsilon-fairness}) and \emph{adversarial normalized cluster balance} (\ancb). We compute an adversarial adaption of NCB to mitigate spurious fairness measures as a result of random assignment of the protected attribute, which would otherwise produce good BER and NCB results with no bias mitigation. Effectively, this approach checks whether the generated features encode underlying unfair patterns in the predictors. As such, the \ancb\ measure offers a more robust evaluation and can uncover subtle correlations that simpler metrics might not detect.

We employ \ancb\ by using a classification model to infer adversarial values for the protected attribute $S$, from all other features; similar to membership inference for $S$. Then, using this adversarial assignment, we compute the normalized cluster balance based on Definition~\ref{def:cluster-fairness}. To perform the required clustering for \ancb, we first encode the heterogeneous data using factorial analysis of mixed-data (FAMD)~\citep{pages2015-famd} for simplicity and then cluster the predictors $\Xsyn$ using a Gaussian mixture model on all FAMD components.

In addition, to ensure comparability with methods that focus on individual tasks, we also consider \emph{task fairness} (TF, see Section~\ref{sec:fair_synth_dat}), by measuring the TF levels across all features (i.e., potential tasks) in the dataset. Naturally, we adopt different fairness measures for categorical and numerical features. For categorical features, which can be either binary or non-ordinal multi-class, we use the notion of $\varepsilon$-fairness with respect to statistical parity, proposed by~\citet{denis2024fairness}. Specifically, for a classifier $f$, unfairness is defined as:
\begin{equation}\label{eq:cat_task_fairness}
U(f) = \max_{k \in [K]} \big( \Pr[f(X) = k \mid S = 1]  \\
- \Pr[f(X) = k \mid S = 0] \big),
\end{equation}
where $K$ is the number of target classes. When the target feature is binary (i.e., $K = 2$), this definition reduces to the standard notion of Statistical Parity.

For continuous features, we measure fairness using the first-order \emph{Wasserstein distance} $W(\cdot, \cdot)$ between the predicted feature distributions for the two sensitive groups. Numerical unfairness $U_\text{n}$ is then defined as:
\begin{equation}\label{eq:num_task_fairness}    
U_\text{n}(f) = W\bigl( f(X_{(S=1)}),\, f(X_{(S=0)}) \bigr).
\end{equation}

A lower value of $U_\text{n}$ indicates higher similarity of the predictive distributions across protected groups, and thus better fairness.

\subsubsection{Privacy}
The primary privacy indicator for this work is differential privacy, given a fixed privacy budget $\varepsilon$. However, other privacy metrics should be computed to obtain a broader perspective of privacy-preserving capabilities beyond differential privacy~\citep{HYRUP2024100608}. Therefore, we use $\epsilon$-identifiability which computes the proportion of real records for which a synthetic data point is its nearest neighbor~\citep{Yoon2020AnonymizationADS-GAN}. Considering the heterogeneity of tabular data, we use Gower's distance~\citep{gower1971} with a feature-wise weighting based on entropy; we adopt the implementation in \emph{SynthEval}~\citep{lautrup2025syntheval} for our experiments.

\section{Results}\label{sec:results}
In this section, we detail experimental findings that address different components of fairness mitigation under differential privacy.

\subsection{Fairness-quality trade-off}
Understanding the interplay between fairness and privacy hyperparameters is essential for optimizing synthetic data generation and interpreting the potential trade-offs. Figure~\ref{fig:heatmap} illustrates this relationship using the Adult dataset as an example. As the fairness parameter $\lambda$ increases, we observe a consistent decline in quality metrics such as AUC, $\alpha$-precision, and $\beta$-recall. This degradation is expected, as stronger disentanglement alters the learned distribution, reducing perceived fidelity to the original data. In fact, Equation~\ref{eq:fair_loss} describes this trade-off, where the expected fidelity degradation explicitly depends on the inherent bias in the non-protected attributes w.r.t.\ to the protected attribute. Although $\alpha$-precision and $\beta$-recall semantically measure fidelity and diversity~\citep{alaa2022-alpha-beta}, they do so only in the context of the original potentially biased data. Consequently, the degradation observed in Figure~\ref{fig:heatmap} should not be interpreted as the true quality of the generated data, but rather as an approximation of the distributional shift incurred by disentanglement of the protected attribute. 

Similarly, the ROC AUC for the synthetic data w.r.t.\ to the original data (measured for the intended target feature) shows a decrease as the fairness parameter $\lambda$ increases. This is also expected behavior when the original data have an inherent bias against the protected attribute, and when comparing the AUC degradation with improvements of statistical parity and equalized odds, the fairness improvements correspond well with the decrease in quality; in particular, when moving from a $\lambda$ value of $4$ to $8$.

Given this perceived quality degradation, the appropriate fairness parameter $\lambda$ can be defined in the context of both fairness improvements of distributional shift. As stated previously, the most appropriate fairness measures for disentanglement fairness are task-agnostic, thus we consider improvements of BER and \ancb. In Figure~\ref{fig:heatmap}, we observe that at a $\lambda$ value of $4$, both BER and \ancb\ plateaus, suggesting that further bias-mitigating perturbation is unnecessary. Furthermore, further increases in $\lambda$ yield noticeably worse quality measures, indicating that $\lambda=4$ could constitute a reasonable hyperparameter for bias mitigation while minimizing the distributional change.

\begin{figure}[tb]
    \centering
    \includegraphics[width=\textwidth]{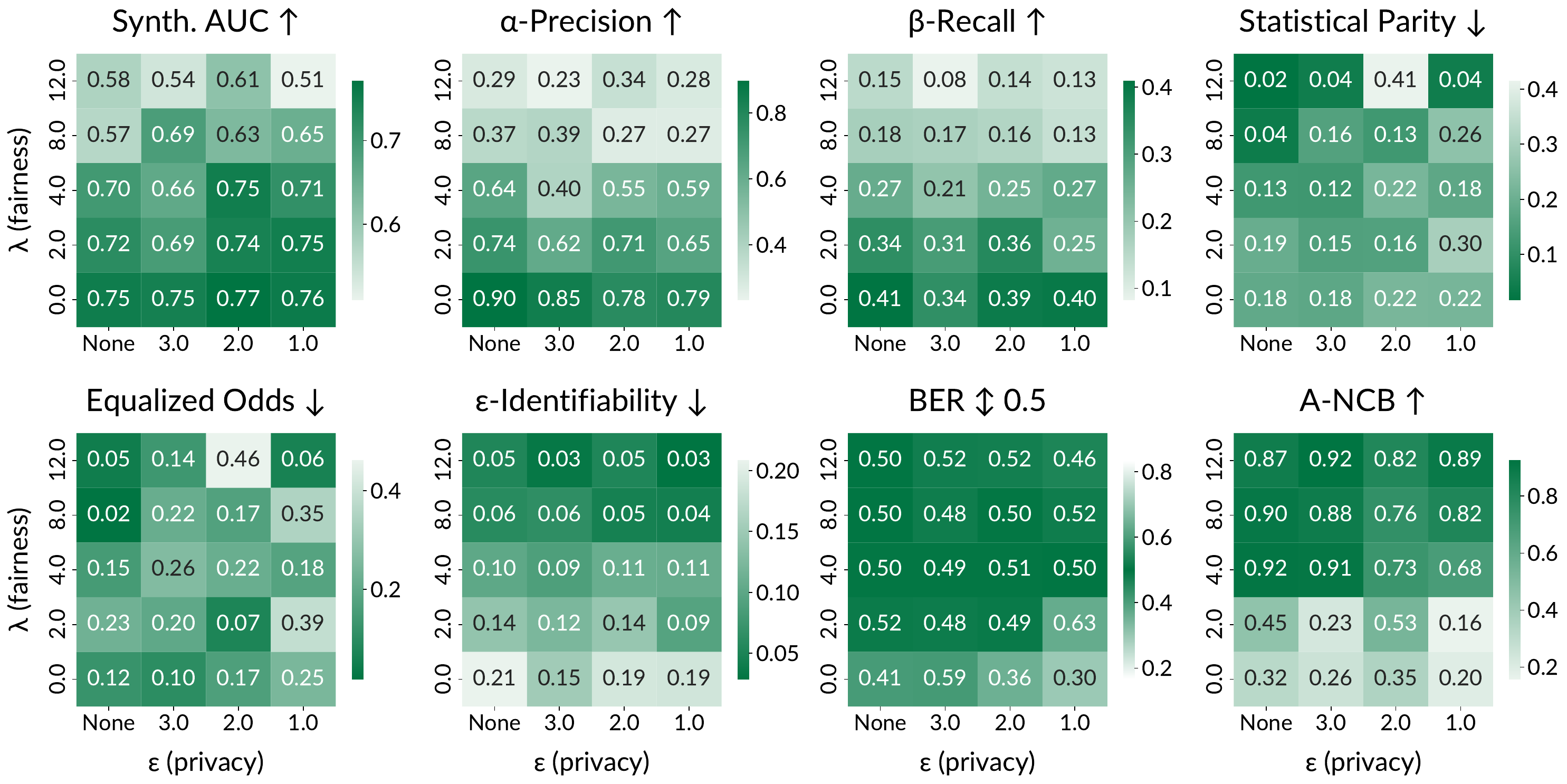}
    \caption{Evaluation results for FLIP on the Adult dataset. Darker shades of green indicate better performance across all metrics. For Synth. AUC, $\alpha$-Precision, $\beta$-Recall, and \ancb, higher values indicate better performance. In contrast, lower scores are preferred for Statistical Parity, Equalized Odds, and $\epsilon$-Identifiability. The optimal value for BER is $0.5$.}
    \label{fig:heatmap}
\end{figure}

\subsection{Privacy-quality trade-off}
The privacy-quality trade-off is a well-known concept that has been repeatedly discussed~\citep{10.1145/3704437, Hyrup2025-zw}. Privacy mitigation generally consists of adding implicit or explicit noise so that identification becomes difficult, resulting in a decrease in the resemblance to the original data. In Figure~\ref{fig:heatmap}, we also observe this effect when considering the similarity-based privacy measure $\epsilon$-identifiability, which improves as the quality measures decrease. Similarly, a more subtle pattern can be observed with stricter privacy budgets for differential privacy, particularly when analyzing $\alpha$-precision, thus adding to the complexity of balancing data quality and privacy protection. Surprisingly, the effect of differential privacy on $\epsilon$-identifiability is negligible, suggesting that while differential privacy effectively limits the contribution of individual data points, it does not significantly alter the overall identifiability landscape for similarity-based privacy definitions. 

\subsection{Fairness-privacy trade-off}
While the trade-offs for both fairness and privacy with respect to data quality are clear from the results in Figure~\ref{fig:heatmap}, the interaction between fairness and privacy is much more nuanced. Notably, we observe clear improvements of $\epsilon$-identifiability with increasing values of $\lambda$. This is not surprising, since the learned representation is intentionally different from the original data distribution, the separation between real and synthetic data decreases, resulting in improved privacy under similarity-based measures.

However, the influence of differential privacy is more complex. Under stronger privacy constraints (lower $\varepsilon$) and weaker fairness regularization, BER and \ancb\ performance slightly deteriorate, suggesting that differential privacy may constrain fairness mitigation by limiting the model’s ability to adjust representations between protected groups. In particular, noise introduced to preserve privacy may obscure subtle patterns necessary for effective fairness-aware disentanglement, thus limiting the model's responsiveness to fairness regularization signals. Analyzing BER and \ancb, we observe that, while stricter privacy constraints limit task-agnostic fairness results, the measures improve, although at a slower rate. Consequently, stronger bias mitigation may be necessary to achieve the same fairness levels under stronger privacy guarantees; naturally, at the cost of degradation in the data quality measures.

\subsection{Comparison with baselines}
Table~\ref{tab:method_comparison} presents comparisons of existing methods and our approach, FLIP, in terms of task-agnostic fairness and the typically reported AUROC for the intended target. Illustrating the aggregated percentage-wise improvements for each method compared to the original data. Here, we find that FLIP significantly outperforms other methods for the task-agnostic measures BER and \ancb\ while suffering a loss in the AUC, yet remaining in a competitive range, indicating that moderate disentanglement retains predictive powers. 

In particular, all methods improve BER, but \tabfairgan and \fldgm\ result in decreases in \ancb, suggesting that at least some of the fairness improvements can be attributed to randomized protected group assignment without successfully obfuscating the underlying biased patterns in the predictors. We also note that we were unable to reproduce the results presented by~\citet{ramachandranpillai2023fair} for their method \fldgm and thus do not make any further comparisons to this method.

\begin{table}
    \centering
    \caption{Relative (percentage \%) changes compared to real data, averaged across all datasets for different hyperparameter settings. Larger values indicate better performance. For each measure, the best performance is emphasized in \textbf{bold}, and the second-best is \underline{underlined}. $\varepsilon=\infty$ indicates no DP for otherwise DP-compatible methods. As \prefair has no implementation without DP, we set $\varepsilon = 100$ as an effectively arbitrarily large value.}
    \label{tab:method_comparison}
    \setlength{\tabcolsep}{1em}
    \begin{tabular}{lccc}
        \toprule
         & BER & \ancb & Synth AUC \\
        \midrule
        FLDGM & +49.12\% & -65.59\% & -31.24\% \\
        PreFair $\epsilon = 3$ & +25.51\% & +4.84\% & -15.58\% \\
        PreFair $\epsilon = \infty$ & +25.59\% & +19.20\% & -13.40\% \\
        TabularARGN $\epsilon = 3$ & +13.18\% & +10.65\% & \underline{-6.07\%} \\
        TabularARGN $\epsilon = \infty$ & +14.33\% & +7.24\% & -6.24\% \\
        TabFairGAN & +7.07\% & -8.85\% & \textbf{-5.36\%} \\
        \cdashline{1-4}\\[-2ex]
        FLIP $\lambda = 4$, $\epsilon = 3$ & \textbf{+67.84\%} & \underline{+20.87\%} & -10.42\% \\
        FLIP $\lambda = 4$, $\epsilon = \infty$ & \underline{+66.76\%} & \textbf{+26.98\%} & -10.46\% \\
        \bottomrule
    \end{tabular}
\end{table}

\subsection{Feature-wise task fairness}
While most fairness-aware works assume a particular target attribute for downstream tasks, FLIP works from a task-agnostic perspective with the objective of lowering the overall bias w.r.t.\ the protected attribute. In this context, we evaluate downstream fairness using every non-protected attribute as a target feature, measuring statistical parity as defined in Equation~\ref{eq:cat_task_fairness} for categorical features and first-order Wasserstein distance defined in Equation~\ref{eq:num_task_fairness} for numerical features. The results for the Adult dataset are presented in Figure~\ref{fig:feature_fairness} for the baselines and various setups of FLIP.

First, we observe a general, though not universal, trend of improved fairness for FLIP as $\lambda$ increases. This suggests that CKA-based disentanglement has the intended effect of reducing bias in the learned representation. While the trend is not perfectly consistent across all settings, the overall direction is encouraging and points to the potential of disentanglement-based approaches for fair synthetic data.

Although the introduction of differential privacy to FLIP noticeably exacerbates the bias-mitigating properties for most settings, the addition generally becomes less impactful for larger $\lambda$ values. It suggests that while fair disentanglement under differential privacy is more challenging, it is not impossible.

Remarkably, FLIP achieves competitive fairness results for the intended target attribute (\emph{income} in the Adult dataset) at higher $\lambda$ values, even without targeted optimization. In addition, FLIP outperforms the baselines in most features, most notably \emph{marital, relationship, age,} and \emph{hours-per-week} with occasional spikes when DP is applied. Generally, the results indicate that while the baselines perform well for the intended target attribute, they have little impact on other features. One exception is \prefair\!\!, which performs well for some non-protected features while producing no improvements for others.

\begin{figure}[tb]
    \centering
    \includegraphics[width=\linewidth]{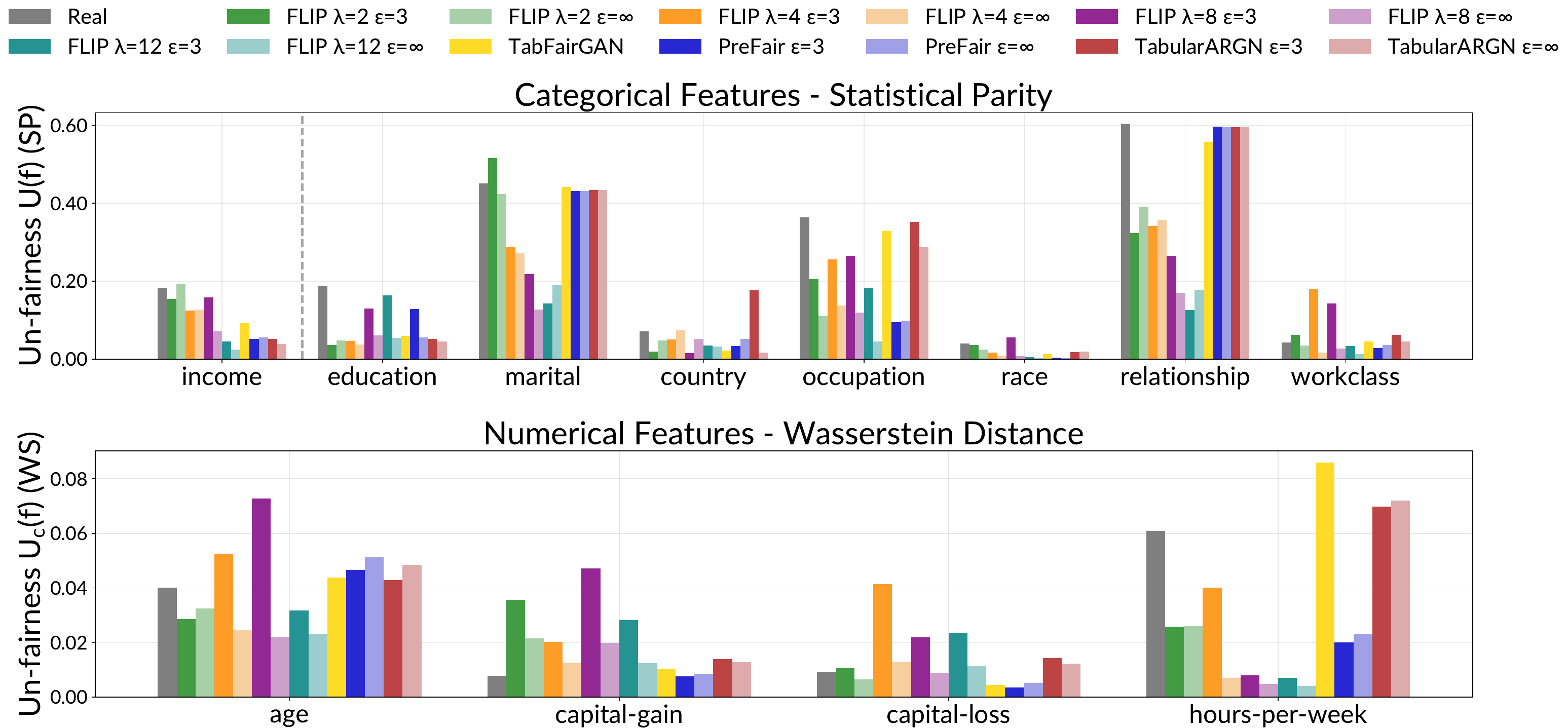}
    \caption{Feature-wise fairness results for FLIP and Baselines on the Adult dataset. Lower values indicate better fairness. Distinct hues represent models while tints denote different $\varepsilon$ values. For FLIP, distinct hues indicate differences in $\lambda$ values. The dashed line separates the intended target from other attributes.}
    \label{fig:feature_fairness}
\end{figure}

\section{Discussion}\label{sec:discussion}
Generating task-agnostic synthetic data that are both fair and private remains a significant challenge. Nevertheless, our results suggest that the disentanglement of the protected attribute is possible with FLIP, providing significant fairness improvements on task-agnostic measures as well as downstream measures. Furthermore, they indicate that while fair disentanglement can enhance distance-based privacy, such as $\epsilon$-identifiability, training under differential privacy may constrain the effectiveness of fairness interventions. However, we observe substantial fairness improvements, particularly in BER and \ancb, with the latter showing the most pronounced gains. Although the results are encouraging, several unresolved challenges remain.

\subsection{Quality evaluation against bias-free data}
A significant challenge with fairness through disentanglement is the evaluation of data quality. Synthetic data are typically evaluated against the original data, since the learning object is to approximate that distribution. In contrast, the learning objective for fairness through disentanglement is to learn a bias-free representation of the population; a distribution that is typically unknown. Consequently, any quality evaluation that compares a disentangled representation with an inherently biased training distribution will produce results with limited meaning with respect to the learning objective.

To address this issue, we assume that for any biased distribution, there exists an unbiased representation for the same population. By extension, it follows that a learned approximation of a biased distribution can be made less biased by systematically perturbing the biased properties embedded within the learned representation. This motivates the two-phase FLIP training procedure, in which a robust and high-quality representation is first learned, followed by a secondary training objective that encourages unbiased neuron activation patterns. The divergence penalty defined in Equation~\ref{eq:fair_loss} ensures that bias mitigation is not simply a result of randomizing the representation, but rather a deliberate and controlled modification of a well-structured representation. The efficacy of this control over disentanglement is evidenced by the gradual degradation of the data quality measures illustrated in Figure~\ref{fig:heatmap} rather than an abrupt drop typically caused by random perturbations.

\subsection{Limitations}
Following the discussion of quality evaluation, it is a natural limitation of this work that appropriate quality measures cannot be performed. Addressing this limitation would require unsupervised quality assessments without reference to the true data or knowledge about a true unbiased distribution, at which point bias mitigation is obsolete. Nevertheless, future research should consider this key evaluation challenge to ensure robust disentanglement.

In addition, a key limitation of attribute disentanglement is the risk of producing out-of-distribution samples, such as \emph{male wives}, which can negatively affect metrics such as $\alpha$-precision and $\beta$-recall. These artifacts arise from the perturbation of the original correlations in the data. While such issues may be mitigated through feature selection or preprocessing strategies tailored to the protected attribute, we leave this for future work focused on integrating fair preprocessing with generative modeling. In addition, one could argue that it is the responsibility of the user, rather than the model design, to remove deterministic associations with the protected attribute. This is particularly relevant when out-of-distribution synthetic records are unwanted in settings focused on disentanglement-based fairness.

Lastly, we observe that incorporating differential privacy can reduce the effectiveness of the bias-mitigating training phase, often requiring stronger perturbations to achieve fair representations. This observation is consistent with previous findings by~\citet{ganev2022-robin-hood}. Given the importance of ensuring both privacy and fairness in data-driven systems, this intersection presents a valuable direction for future research. Specifically, future research is needed to better understand and mitigate this trade-off, with the objective of enhancing the robustness of generative models in applications where privacy and fairness are both critical.

\section{Conclusion}
In this work, we present Fair Latent Intervention under Privacy guarantees (FLIP), a generative model for heterogeneous tabular data that ensures fair and private synthetic data. By defining optimal fairness as the complete disentanglement of the protected attribute from the training data, we adopt a task-agnostic approach that contrasts with most prior work focused on specific downstream tasks. 

In our two-phased training procedure, we first learn a high-quality representation of the true data, followed by a secondary phase that aligns neuron activation patterns across protected groups. Both phases are trained under Rényi differential privacy, which we tailor for group-wise noise levels under a balanced Poisson sampling scheme.

Empirical results show that attribute disentanglement improves task-agnostic fairness measures such as balanced error rate (BER) and adversarial normalized cluster balance (\ancb), while also enhancing downstream fairness across multiple attributes. Although differential privacy introduces limitations to the disentanglement objective, we find that stronger bias mitigation still leads to meaningful fairness improvements. 

In general, our findings support the viability of task-agnostic fairness in synthetic data under privacy constraints, broadening the applicability of synthetic data in sensitive domains.

\bibliography{references.bib}
\bibliographystyle{tmlr}

\end{document}